\newtheorem{definition}{Definition}
\newtheorem{lemma}{Lemma}
\newtheorem{assumption}{Assumption}
\title{Negative Flux Aggregation to Estimate Feature Attributions}
\author{
Xin Li
\and
Deng Pan\and
Chengyin Li\and
Yao Qiang\And
Dongxiao Zhu
\affiliations
Department of Computer Science, Wayne State University, USA\\
\emails
\{xinlee, pan.deng, cli, yao, dzhu\}@wayne.edu
}
\begin{document}

\maketitle

\begin{abstract}
There are increasing demands for understanding deep neural networks' (DNNs) behavior spurred by growing security and/or transparency concerns. Due to multi-layer nonlinearity of the deep neural network architectures, explaining DNN predictions still remains as an open problem, preventing us from gaining a deeper understanding of the mechanisms. To enhance the explainability of DNNs, we estimate the input feature's attributions to the prediction task using divergence and flux. Inspired by the divergence theorem in vector analysis, we develop a novel Negative Flux Aggregation (NeFLAG) formulation and an efficient approximation algorithm to estimate attribution map. Unlike the previous techniques, ours doesn't rely on fitting a surrogate model nor need any path integration of gradients. Both qualitative and quantitative experiments demonstrate a superior performance of NeFLAG in generating more faithful attribution maps than the competing methods. Our code is available at \url{https://github.com/xinli0928/NeFLAG}.
\end{abstract}

\section{Introduction}
The growing demand for trustworthy AI in security- and safety-critic domains has motivated developing new methods to explain DNN predicitons using image, text and tabular data  \cite{chefer2021transformer,li2021improving,panijcai20,qiang2022attcat}. As noted in some pioneering works, e.g., \cite{hooker2019benchmark,smilkov2017smoothgrad,kapishnikov2021guided}, faithful explanation is indispensable for a DNN model to be trustworthy. However, it remains to be challenging for human to understand a DNN's predictions in terms of its input features due to its black-box nature.

As such, the field of explainable machine learning has emerged and seen a wide array of model explanation approaches. Among others, local approximation and gradient based methods are the two major categories that are more intensively researched. Local approximation methods mimic the local behavior of a black-box model within a certain neighborhood using some simple interpretable models, such as linear models and decision trees. However, they either require an additional model training processes (e.g., LIME\cite{ribeiro2016should}, SHAP\cite{lundberg2017unified}), or rely on some customized propagation rules (e.g., LRP\cite{bach2015pixel}, Deep Taylor Decomposition\cite{montavon2017explaining}, DeepLIFT\cite{shrikumar2017learning}, DeepSHAP\cite{chen2021explaining}). Gradient based methods such as Saliency Map \cite{simonyan2013deep}, SmoothGrad \cite{smilkov2017smoothgrad}, FullGrad \cite{srinivas2019full}, Integrated Gradient (IG) and its variants \cite{sundararajan2017axiomatic,hesse2021fast,erion2021improving,pan2021explaining,kapishnikov2019xrai,kapishnikov2021guided}  require neither surrogates nor customized rules but must tackle unstable estimates of gradients w.r.t. the given inputs. IG type of path integration based methods mitigate this issue via a path integration for gradient smoothing, however, this also introduces another degree of instability and noise sourced from arbitrary selections of baselines or integration paths. Others (e.g., Saliency Map, FullGrad) relax this requirement nevertheless can be vulnerable to the small perturbations of the inputs due to its locality. 

Ideally, we hope to avoid surrogates, special rules, arbitrary baselines and/or path integrals in interpreting DNN's prediction. In addition, since DNN interpretation often works on per sample basis, efficient algorithms are also crucial for a scalable DNN explanation technique. Here we examine the prediction behavior of DNNs through the lens of divergence and flux in vector analysis. We propose a novel Negative Flux Aggregation (NeFLAG) approach, which reformulates gradient accumulation as divergence. By converting divergence to gradient fluxes according to divergence theorem, NeFLAG interprets the DNN prediction with an attribution map obtained by efficient aggregation of the negative fluxes (see divergence theorem, flux, and divergence in Preliminaries).


We summarize our contributions as follows: 1) To the best of our knowledge, this is the first attempt to tackle the problem of explaining DNN model prediction leveraging the concepts of gradient divergence and fluxes. 2) Our NeFLAG technique eliminates the need for path integration via converting divergence to flux estimation, opening a new avenue for gradient smoothing techniques. 3) We propose an efficient approximation algorithm to enable a scalable DNN model explanation technique. 4) We investigate the relationship between flux estimation and Taylor approximation, bridging our method with other local approximation methods. 5) Both qualitative and quantitative experiment results demonstrate NeFLAG's superior performance to the competing methods in terms of interpretation quality.

\section{Related Work}
We categorize our approach as one of the local approximation methods. Other examples include LIME \cite{ribeiro2016should}, which fits a simple model (linear model or decision trees etc.) in the neighborhood of a given input point, and use this simple model as a local interpretation. Similarly, SHAP \cite{lundberg2017unified} generalizes LIME via a Shapley value reweighing scheme that has been proved to yield more consistent results. These methods, in general, enjoy their own merit when the underlying model is completely black-box, i.e., no gradient and model architecture information are available. However, in the case of DNN model interpretation, we usually know both model architecture and gradient information, so it is often beneficial to utilize this extra information as we are essentially interpreting the DNN model itself rather than a surrogate model. Saliency Map \cite{simonyan2013deep} and FullGrad \cite{srinivas2019full} exploit the gradient information, yet remain sensitive to small perturbations of the inputs. SmoothGrad \cite{smilkov2017smoothgrad} improves it by averaging attributions over multiple perturbed samples of the input by adding Gaussian noise to the original input. As we will see in Section of Preliminaries below, our NeFLAG method not only exploits the smoothing gradient information to achieve robustness against perturbations, but also eliminates the need for fitting an extra surrogate model.

Another line of research explains DNN predictions via accumulating/integrating gradients along specific paths from baselines to the given input. Here we denote them as path integration based methods, represented by Integrated Gradients (IG) \cite{sundararajan2017axiomatic}. Recent variants includes fast IG \cite{hesse2021fast}, and IG with alternative paths and baselines (e.g., Adversarial Gradient Integration (AGI) \cite{pan2021explaining}, Expected Gradient (EG) \cite{erion2021improving}), Guided Integrated Gradients (GIG)\cite{kapishnikov2021guided}, Attribution through Regions (XRAI) \cite{kapishnikov2019xrai}). IG chooses a baseline (usually a black image) as the reference to calculate the attribution by accumulating gradients along a straight-line path from the baseline to the given input in the input space. AGI, on the other hand, relaxes the baseline and straight-line assumption by utilizing adversarial attacks to automatically generate both baselines and paths, thereafter accumulating gradients along these paths. Both would require paths and baselines for gradient smoothing via path integration regardless of manually picked or generated, which could introduce attribution noises along the path (as noted by \cite{sundararajan2017axiomatic}, different paths could result in completely different attribution maps). On the contrary, our NeFLAG method doesn't need a path nor a baseline, in which the gradient smoothing is controlled by a single radius parameter $\epsilon$, opening up a new direction for gradient smoothing techniques.


\section{Preliminaries: Divergence and Flux}\label{sec:prelim}
We start explaining our approach by introducing the concept of divergence and 
flux in vector analysis. Lets first consider a general scenario: to interpret a 
DNN's prediction, we hope to characterize its local behavior. Let's define a 
DNN model $f: \boldsymbol{X}\to \boldsymbol{Y} $, which takes inputs 
$\boldsymbol{x}\in \boldsymbol{X}$, and outputs $f(\boldsymbol{x}) \in 
\boldsymbol{Y}$. For simplicity, we also assume that the model is locally 
continuously differentiable. When we query the interpretation for a given input 
$\boldsymbol{x}$, we are interested in how and why a decision is made, i.e., 
what is the underlying decision boundary. In fact, this is also the idea behind 
many other interpretation methods such as LIME\cite{ribeiro2016should} and 
SHAP\cite{lundberg2017unified}. In these methods, an interpretable linear model 
(or other simple models) is fitted via sampling additional points around the 
neighborhood of input of interest. Clearly, taking advantage of a certain kind 
of neighborhood aggregation is a promising route for interpretation of local 
approximation.

When the gradients $\nabla_{\boldsymbol{x}} f$ are available, it is already a 
decent indicator of DNN's local behavior. If we only calculate the gradients at 
$\boldsymbol{x}$, the resulting attribution map is called the Saliency 
Map\cite{simonyan2013deep}. However, without aggregation, these gradients are 
usually unstable due to adversarial effect, where a small perturbation of the 
inputs can lead to large variation of gradient values. On the other hand, these gradients may be vanishing due to the so-called saturation effect \cite{miglani2020investigating}. To overcome the 
instability, lets denote a neighborhood around input $\boldsymbol{x}$ to be 
$V_{\boldsymbol{x}}$, and estimate the average gradients over it. Intuitively, 
the resulting vector $\int_{V_{\boldsymbol{x}}}\nabla f\cdot \mathrm{d} 
V_{\boldsymbol{x}}$ can be viewed as a local approximation for the underlying 
model. 
However, neither a single gradient evaluation nor neighborhood gradient 
integration exploits the fact that
the function value $f(\boldsymbol{x})$ can be viewed as a result of gradient 
field flow accumulation. The gradient field flow therefore inspires us to 
accumulate the gradients along the flow direction, giving rise to the new idea 
of gradient accumulation.

\subsection{Gradient Accumulation}
Plenty of methods have already embraced the idea of accumulation without 
explicitly defining it. They typically accumulate the gradients along a certain 
path, for example, Integrated Gradients (IG)\cite{sundararajan2017axiomatic} 
integrates the gradients along a straightline in the input space from a 
baseline to the given input. Specifically, they study not a single input point, 
but the gradient flows from a baseline point towards the given input 
(Figure~\ref{fig:igflow}). 
The baseline here is selected assuming no information is contained for 
decision. Another example is Adversarial Gradient Integration (AGI) 
\cite{pan2021explaining}, which instead integrates along multiple paths 
generated by an adversarial attack algorithm, and aggregates all of them. This 
method also accumulates gradient flows, and it differs from IG only in 
accumulation paths and selection of baselines. 

\begin{figure}
	\centering
	\includegraphics[width=0.75\linewidth]{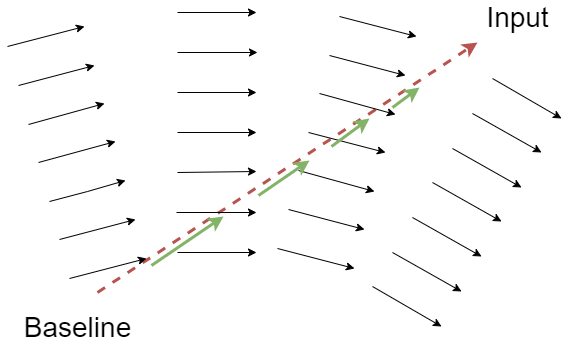}
	\caption{In a non-uniform field, IG accumulates gradient flow from a 
	baseline to the given input. Here black solid arrows denote the gradient 
	field, the red dashed arrow denotes the accumulation direction, and the 
	length of green solid arrows denote the gradient magnitude on the 
	accumulation direction.
	}
	\label{fig:igflow}
 \vspace{-4mm}

\end{figure}

Although both IG and AGI exploit the idea of gradient accumulation, they only 
tackle one-dimensional accumulation, i.e., accumulating over a path/paths. In 
fact, \cite{pan2021explaining} point out that neither an accumulation path is 
unique, nor a single path is necessarily sufficient. Ideally, an accumulation 
method should take into account of every possible baseline and path. However, 
it is nearly impossible to accumulate all possible paths. Then how could we 
overcome such an obstacle? A promising direction is to examine the problem through the lens of  
divergence and flux.

\subsection{Divergence and Flux}
For studying accumulation, the definition of divergence perfectly fits our 
needs. Let $\mathbf{F} = \nabla_x f$ be the gradient field, the divergence 
$\operatorname{div} \mathbf{F}$ is defined by
\begin{align}\label{def:div1}
\operatorname{div} \mathbf{F}=\nabla \cdot 
\mathbf{F}=\left(\frac{\partial}{\partial x_1}, \frac{\partial}{\partial x_2}, 
...\right) \cdot\left(F_{x_1}, F_{x_2}, ...\right),
\end{align}
where $F_{x_i}$ and $x_i$ are the gradient vector's and input's $i$th entry, 
respectively. The above definition is convenient for computation, however, 
difficult to interpret. A more intuitive definition of divergence can be 
described as follows.
\begin{definition}\label{def:div2}
	Given that $\mathbf{F}$ is a vector field, the divergence at position 
	$\boldsymbol{x}$ is defined by the total vector flux through an 
	infinitesimal surface enclosure $S$ that encloses $\boldsymbol{x}$, i.e.,
	\begin{align}
	\left.\operatorname{div} \mathbf{F}\right|_{\mathbf{x}}=\lim _{V 
	\rightarrow 0} \frac{1}{|V|} \oiint_{S(V)} \mathbf{F} \cdot 
	\hat{\mathbf{n}} d S,
	\end{align}
	where $V$ is an infinitesimal volume around $\boldsymbol{x}$ that is 
	enclosed by $S$, and $\mathbf{F} \cdot \hat{\mathbf{n}}$ denotes the normal 
	vector flow (flux) through the surface $S$.
\end{definition}

It is not hard to see that $\nabla \cdot \mathbf{F}$ is essentially the 
(negatively) accumulated gradients at point $\boldsymbol{x}$ from 
Definition~\ref{def:div2}, as it defines the total gradient flow contained in 
the infinitesimal volume $V$.

Therefore, given input $\boldsymbol{x}$, to interpret the model within its 
neighborhood denoted by $V_{\boldsymbol{x}}$, we only need to calculate the 
accumulated gradients by integrating the divergence over all points within this 
neighborhood. The attribution heatmap can be obtained by simply replacing the 
dot product in Eq.~\ref{def:div1} by an element-wise product, and then 
integrate, i.e.,
\begin{align}\label{eq:3}
\text{Attribution} = \int_{V_{\boldsymbol{x}}}\left(\frac{\partial}{\partial 
x_1}, \frac{\partial}{\partial x_2}, ...\right) \odot\left(F_{x_1}, F_{x_2}, 
...\right) dV.
\end{align}

Eqs. 2 and 3 are equivalent because of the divergence theorem described below. We use element-wise product because we want to disentangle the effects from different input entries. 

However, there is one major obstacle, that is, how to integrate the whole 
volume $V_{\boldsymbol{x}}$ especially when the input dimension is high. It is 
tempting to sample a few points inside the neighborhood surface enclosure, and 
sum up the divergence. But the computational cost for divergence, which 
involves second order gradient computation, is much higher than only 
calculating the first order gradients.

Fortunately, we can gracefully convert the volume integration of divergence 
into surface integration of gradient fluxes using divergence theorem, and thus 
simplify the computation.

\subsection{Divergence Theorem}
In vector analysis, the divergence theorem states that the total divergence 
within an enclosed surface enclosure is equal to the surface integral of the 
vector field's flux over such surface enclosure. The flux is defined as the 
vector field's normal component to the surface. Formally, let $\mathbf{F}$ be a 
vector field in a space $U$, $S$ is defined as a surface enclosure in such 
space, and we call the neighborhood volume that is enclosed in $S$ as $V$. 
Assuming that $\mathbf{F}$ is continuously differentiable on $V$, the 
\emph{Divergence Theorem} states that
\begin{align}
\iiint_{V}(\nabla \cdot \mathbf{F}) \mathrm{d} V=\oiint_{S}(\mathbf{F} \cdot 
\hat{\mathbf{n}}) \mathrm{d} S.
\end{align}
Here the integrating symbols $\iiint$ and $\oiint$ don't necessarily need to be 
$3$-dimensional and $2$-dimensional, respectively. They can be any higher 
dimension as long as the surface integration is one dimension lower than the 
volume integration.

In DNNs, the gradient field $\mathbf{F} = \nabla_{\boldsymbol{x}} f$ w.r.t. the 
inputs is exactly a vector field in the input space. Note that there could be 
non-differentiable points (that could violate the continuously differentiable 
condition due to some activation functions such as ReLU may not differentiable 
at some point). Nevertheless we can safely assume that it is at least 
continuously differentiable within a certain small neighborhood.

\begin{figure*}[th]
	\centering
	\includegraphics[width=0.75\linewidth]{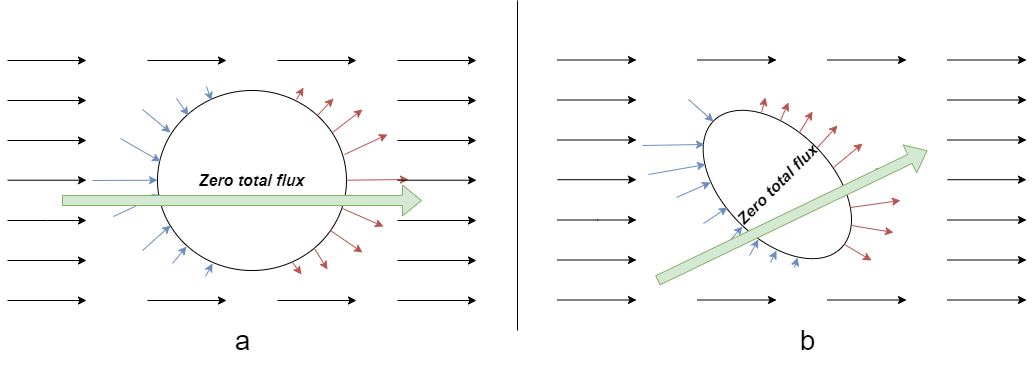}
	\caption{The fluxes on closed surfaces with different shapes. \textbf{a}. 
	The total flux of a closed sphere surface is zero when the field is 
	uniform, and the vector obtained by negative flux vector aggregation is 
	coherent to the field vector. Here red arrows and blue arrows denote 
	positive and negative flux. The green arrow bar denotes the 
	direction of the aggregated vector. \textbf{b}. When the closed surface is 
	not symmetric with respect to the field, the resulting aggregated vector is 
	not in the same direction as the field.}
	\label{fig:linearflux}
 \vspace{-4mm}

\end{figure*}

\section{Negative Flux Aggregation}\label{sec:model}
In this Section, we describe our interpretation approach, Negative Flux 
Aggregation (NeFLAG). We first explain the necessity of differentiating 
positive and negative fluxes, and then describe an algorithm on how to find 
negative fluxes on a $\epsilon$-sphere and aggregate them for interpretation.
\subsection{Negative Fluxes Define a Linear Model}
Interpreting via directly integrating all fluxes is an intriguing approach, 
however, the positive and negative fluxes should be interpreted differently. By 
convention, positive fluxes are pointed outward the enclosure surface, and the 
negative fluxes are pointed inward (Figure~\ref{fig:linearflux}a). Let $S$ be a 
surface enclosure and $V$ is its corresponding volume, a positive flux can be 
viewed as the gradient loss of $V$, and similarly a negative flux is the 
gradient gain of $V$. It means the confidence gain or loss of the prediction of 
$x$. For example, from $f(\boldsymbol{x}) = 0.99$ to $f(\boldsymbol{x}) = 0.8$ 
represents a confidence loss. The sum of negative fluxes here means a total 
confidence loss if $\boldsymbol{x}$ moves out of the neighborhood. When 
interpreting a DNN model prediction $f$, assuming $\boldsymbol{x}$ is a point 
in the input space $U$, the prediction of it by $f$ is class $A$. To interpret 
this prediction at a location in the input space (i.e., $\boldsymbol{x}$), we 
need to find out that moving to which direction could make it less likely to be 
$A$? Apparently, a good choice is to move in the direction of negative 
gradient, i.e., with negative gradient fluxes. In contrast, the positive flux 
can be interpreted as what makes it more likely to be class $A$. But since 
$\boldsymbol{x}$ is already predicted as $A$, this information is less 
informative than the negative fluxes in terms of interpretation. (An additional 
intuition is that for a given input $\boldsymbol{x}$ we are more interested in 
its direction towards decision boundary, instead of the other side of it.)


\noindent\textbf{Linear Case}: It is not hard to see that in a linear vector 
field $\mathbf{F}_l$ with underlying linear function $f_l$, the sum of fluxes 
within any enclosure is 0 (Figure~\ref{fig:linearflux}).  Moreover, when the 
enclosure is an $N$-dimensional sphere (where $N$ is the dimension of the 
space), we can sum up over all vectors on the sphere surface that are 
contributing negative flux. The obtained summed-up vector is then equivalent to 
the weight vector of the underlying linear function $f_l$. 

The observation of the linear case study inspires us to learn the weight vector 
of a local linear function by summing up all negative fluxes on an $N$-d 
sphere. Hence we say that the vector sum of all negative fluxes interprets the 
local behavior of the original function via a local linear approximation. 
Formally, we state Assumption~\ref{assum:negflux} as follows.

\begin{assumption}\label{assum:negflux}
	Given model $f$ whose gradient field is $\mathbf{F}$, and sample 
	$\boldsymbol{x}$, let $S_{\boldsymbol{x}}$ be an $N$-d $\epsilon$-sphere 
	that centered at $\boldsymbol{x}$. The vector sum of all negative flux can 
	be obtained by
	\begin{align}\label{eq:def1}
	\boldsymbol{w} = \oiint_{S_{\boldsymbol{x}}^-}\mathbf{F} \odot 
	\hat{\mathbf{n}} dS_{\boldsymbol{x}}, 
	\end{align}
	where $\odot$ denotes element-wise product, and $S_{\boldsymbol{x}}^-$ is a 
	set of point on the sphere where flux is negative. The linear model defined 
	by weight vector $\boldsymbol{w}$ is then a local linear interpretation to 
	the original model of input $\boldsymbol{x}$.
\end{assumption}

Note that this assumption is derived from the observation when $y=a_1 x_1 + a_2 
x_2+...+a_n x_n$, so we interpret this linear model by attributions $\mathbf{
	attr}=(a_1 x_1, a_2 x_2, ..., a_n x_n)$. The entry $attr_i=a_ix_i$ 
	represents the contribution of the $i$th attribute to the final prediction. 
	Note that the attribution can also be written as $\mathbf{
	attr}= \boldsymbol{a}\odot \boldsymbol{x}$, hence the element-wise product 
	in the Assumption. It is also critical to set $S_{\boldsymbol{x}}$ as a 
	sphere in Assumption~\ref{assum:negflux}, any other shape introducing any 
	degree of asymmetry would cause disagreement between local linear 
	approximation and the negative flux aggregation, as shown in 
	Figure~\ref{fig:linearflux}b.

\noindent\textbf{Interpretation:} An interpretation can be achieved by plotting 
an attribution map using the vector $\boldsymbol{w}$ from Eq.~\ref{eq:def1}. 
The rationale is directly from the correspondence between this vector and the 
underlying linear model.

\subsection{An Approximation Algorithm}
To calculate Eq.~\ref{eq:def1}, we have to find out both the point 
$\tilde{\boldsymbol{x}}$ with negative flux  and its normal vector 
$\hat{\mathbf{n}}$. For the latter, given a candidate point 
$\tilde{\boldsymbol{x}}$ on the sphere surface $S_{\boldsymbol{x}}$, we can 
replace $\hat{\mathbf{n}}$ by $(\tilde{\boldsymbol{x}} - \boldsymbol{x}) / 
|\tilde{\boldsymbol{x}} - \boldsymbol{x}|$ because $\boldsymbol{x}$ is the 
center of the sphere $S_{\boldsymbol{x}}$, and the radius 
$\epsilon=|\tilde{\boldsymbol{x}} - \boldsymbol{x}|$.
As for point $\tilde{\boldsymbol{x}}$ with negative flux, a straightforward 
solution is to randomly subsample a list of points on the $\epsilon$-sphere, 
then select those with negative flux. However, this trick has no guarantee on 
how many subsampling is sufficient, and also cannot guarantee that a negative 
flux exists. Therefore, we need an approximation algorithm that can guide us to those points 
with negative fluxes.

Since we are interested in interpreting the local behavior, meaning that the 
radius of $\epsilon$-sphere should be sufficiently small. This provides us with 
the possibility to approximate the gradient (flux). i.e.,
\begin{align}\label{eq:fluxapprox}
\mathbf{F}(\tilde{\boldsymbol{x}}) \cdot \hat{\mathbf{n}} \approx 
\frac{\left(f(\tilde{\boldsymbol{x}}) - f(\boldsymbol{x})\right)}{\epsilon}.
\end{align}
Since $\boldsymbol{x}$ is the center of $\epsilon$-sphere $S_{\boldsymbol{x}}$, 
to make the flux (left hand side of Eq.~\ref{eq:fluxapprox}) negative, we only 
need to find $\tilde{\boldsymbol{x}}$ such that $f(\tilde{\boldsymbol{x}}) < 
f(\boldsymbol{x})$. Moreover, for interpreting a classification task, it is 
usually the case that $f(\boldsymbol{x})$ is of a high value (when a prediction 
is confident, the output probability would be close to 1). Therefore, if we 
could find a random local minimum $\tilde{\boldsymbol{x}}$ on the 
$\epsilon$-sphere $S_{\boldsymbol{x}}$, it would most likely have 
$f(\tilde{\boldsymbol{x}}) < f(\boldsymbol{x})$.

In order to obtain a local minimum starting with any arbitrary initialization, 
we propose the following Lemma.

\begin{lemma}\label{lemma}
	Let $\boldsymbol{x}$ be the center of sphere $S_{\boldsymbol{x}}$, and 
	$\boldsymbol{x}^{(0)}$ be a random point on the $\epsilon$-sphere 
	$S_{\boldsymbol{x}}$. We define the following recurrence
	\begin{align}\label{eq:update}
	\boldsymbol{x}^{(k)} = \boldsymbol{x} - \epsilon \cdot 
	\frac{\mathbf{F}\left(\boldsymbol{x}^{(k-1)}\right)}{\left| 
	\mathbf{F}\left(\boldsymbol{x}^{(k-1)}\right) \right|}. 
	\end{align}
	Let $V_x$ be the space enclosed by $S_{\boldsymbol{x}}$, assuming 
	$\mathbf{F}$ is continuous on $V_x$, then $f(\boldsymbol{x}^{(k)})$ 
	converges to a local minima on $S_{\boldsymbol{x}}$ as $k\to \infty$.
\end{lemma}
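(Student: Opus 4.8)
The plan is to read the recurrence in Eq.~\ref{eq:update} as a self-map $T$ of the $\epsilon$-sphere and to prove convergence by a descent (Lyapunov) argument, taking $f$ itself as the Lyapunov function and exploiting compactness of $S_{\boldsymbol{x}}$. First I would record the elementary fact that every iterate with $k\ge 1$ lies on the sphere: since $|\boldsymbol{x}^{(k)}-\boldsymbol{x}| = \epsilon\,|\mathbf{F}(\boldsymbol{x}^{(k-1)})|/|\mathbf{F}(\boldsymbol{x}^{(k-1)})| = \epsilon$, the iteration never leaves $S_{\boldsymbol{x}}$, so the problem is genuinely the constrained minimization of $f$ on the compact surface. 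Writing $\boldsymbol{n}_{k-1} = (\boldsymbol{x}^{(k-1)}-\boldsymbol{x})/\epsilon$ for the outward unit normal and $\hat{\mathbf{F}}_{k-1} = \mathbf{F}(\boldsymbol{x}^{(k-1)})/|\mathbf{F}(\boldsymbol{x}^{(k-1)})|$, the step displacement is $\boldsymbol{x}^{(k)}-\boldsymbol{x}^{(k-1)} = -\epsilon(\hat{\mathbf{F}}_{k-1}+\boldsymbol{n}_{k-1})$, and I would identify the fixed points of $T$ as exactly the points where $\hat{\mathbf{F}}=-\boldsymbol{n}$, i.e.\ the gradient is radial and points inward, which is the first-order (Lagrange) condition for a constrained critical point of $f$ on the sphere.

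The core step is the monotone-descent estimate. By the mean value theorem there is a point $\boldsymbol{\xi}$ on the segment joining $\boldsymbol{x}^{(k-1)}$ and $\boldsymbol{x}^{(k)}$ with $f(\boldsymbol{x}^{(k)})-f(\boldsymbol{x}^{(k-1)}) = \mathbf{F}(\boldsymbol{\xi})\cdot(\boldsymbol{x}^{(k)}-\boldsymbol{x}^{(k-1)})$. Splitting $\mathbf{F}(\boldsymbol{\xi}) = \mathbf{F}(\boldsymbol{x}^{(k-1)}) + (\mathbf{F}(\boldsymbol{\xi})-\mathbf{F}(\boldsymbol{x}^{(k-1)}))$ and substituting the displacement, the leading term evaluates to $-\epsilon|\mathbf{F}(\boldsymbol{x}^{(k-1)})|(1+\hat{\mathbf{F}}_{k-1}\cdot\boldsymbol{n}_{k-1})$, which is $\le 0$ because $\hat{\mathbf{F}}_{k-1}\cdot\boldsymbol{n}_{k-1}\ge -1$, and vanishes precisely at a fixed point. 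The correction $(\mathbf{F}(\boldsymbol{\xi})-\mathbf{F}(\boldsymbol{x}^{(k-1)}))\cdot(\boldsymbol{x}^{(k)}-\boldsymbol{x}^{(k-1)})$ is controlled using uniform continuity of $\mathbf{F}$ on the compact ball $\overline{V}_{\boldsymbol{x}}$ together with the step length being at most $2\epsilon$; for the small-radius regime in which the method operates, this correction is dominated by the leading term away from fixed points, giving $f(\boldsymbol{x}^{(k)})\le f(\boldsymbol{x}^{(k-1)})$.

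Finally, $f(\boldsymbol{x}^{(k)})$ is non-increasing and bounded below (a continuous function on the compact sphere attains its minimum), hence convergent. The descent estimate then forces $1+\hat{\mathbf{F}}_{k-1}\cdot\boldsymbol{n}_{k-1}\to 0$, so the iterates approach the fixed-point set where the gradient is inward-radial; passing to a convergent subsequence by compactness yields an accumulation point $\boldsymbol{x}^{*}$ that is a constrained critical point. Because $f$ strictly decreased along the trajectory, $\boldsymbol{x}^{*}$ cannot be an attracting local maximum, and the inward-radial gradient together with the descent structure identifies it as a local minimum of $f$ on $S_{\boldsymbol{x}}$, which is the claim.

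I expect the main obstacle to be rigorously justifying the descent inequality and the minimality of the limit under the stated hypothesis that $\mathbf{F}$ is merely continuous. Continuity gives uniform continuity on $\overline{V}_{\boldsymbol{x}}$, which makes the correction term $o(\epsilon)$ relative to the leading term \emph{only} when the latter is bounded away from zero; near the fixed-point set both terms are small and comparable, so proving genuine monotone descent there (rather than mere accumulation) is delicate and is cleanest under a local Lipschitz-gradient, i.e.\ bounded-Hessian, assumption, which holds locally for smooth-activation networks. A second subtlety is excluding convergence to saddle points: the Lyapunov argument rules out local maxima but, as with gradient descent in general, excluding saddles requires either a stable-manifold/generic-initialization argument or strictness of the descent, so I would either weaken the conclusion to ``critical point, generically a local minimum'' or invoke the random initialization of $\boldsymbol{x}^{(0)}$ to avoid the measure-zero set of non-minimizing critical points.
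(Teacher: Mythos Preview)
Your proposal is sound and takes a genuinely different route from the paper. The paper does not attempt a direct convergence argument; instead it starts from the projected (tangential) gradient-descent step on the sphere,
\[
\boldsymbol{x}^{(k)} = \boldsymbol{x}^{(k-1)} - \epsilon\Bigl(\hat{\mathbf{F}}_{k-1} - (\hat{\mathbf{F}}_{k-1}\cdot\hat{\mathbf{n}})\,\hat{\mathbf{n}}\Bigr),
\]
implicitly \emph{assumes} that this constrained gradient descent converges to a local minimum, and then argues via the approximation $(\boldsymbol{x}-\boldsymbol{x}^{(k-1)})/\epsilon \approx (\hat{\mathbf{F}}_{k-1}\cdot\hat{\mathbf{n}})\,\hat{\mathbf{n}}$ that the projected step collapses to the stated recurrence~\eqref{eq:update}. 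Since $(\boldsymbol{x}-\boldsymbol{x}^{(k-1)})/\epsilon=-\hat{\mathbf{n}}$, that approximation is exactly $\hat{\mathbf{F}}_{k-1}\cdot\hat{\mathbf{n}}\approx -1$, which only holds near the desired fixed point, so the paper's derivation is a heuristic identification of the two update rules rather than a convergence proof. Your Lyapunov approach works with the recurrence directly, extracts the explicit leading-order decrease $-\epsilon|\mathbf{F}|(1+\hat{\mathbf{F}}\cdot\boldsymbol{n})$, and correctly flags that dominating the mean-value remainder and excluding saddles requires more than bare continuity of $\mathbf{F}$. The paper's route is shorter and makes the link to constrained gradient descent transparent, but it sweeps under the rug precisely the regularity and saddle-point issues you isolate; your route is self-contained and makes the hypotheses actually needed (Lipschitz gradient, generic initialization) explicit.
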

\begin{proof}
	To find a local minima on the surface $S_{\boldsymbol{x}}$, given the 
	current candidate $\boldsymbol{x}^{(k-1)}$ and its corresponding gradient 
	$\mathbf{F}(\boldsymbol{x}^{(k-1)})$, the updating formula needs to take 
	$\boldsymbol{x}^{(k-1)}$ to $\boldsymbol{x}^{(k)}$ such that 
	$\boldsymbol{x}^{(k)}$ is on the opposite direction of the tangent 
	component of $\mathbf{F}(\boldsymbol{x}^{(k-1)})$ to the surface 
	$S_{\boldsymbol{x}}$. Assuming the updating step to be $\epsilon$, the 
	updating rule is then
	\begin{align}\label{eq:lemma}
	\boldsymbol{x}^{(k)} = \boldsymbol{x}^{(k-1)} - \epsilon \cdot \left( 
	\frac{\mathbf{F}\left(\boldsymbol{x}^{(k-1)}\right)}{\left| 
	\mathbf{F}\left(\boldsymbol{x}^{(k-1)}\right) \right|} - 
	\frac{\mathbf{F}\left(\boldsymbol{x}^{(k-1)}\right) \cdot 
	\hat{\mathbf{n}}}{\left| \mathbf{F}\left(\boldsymbol{x}^{(k-1)}\right) 
	\right|}  \cdot \hat{\mathbf{n}}. \right).
	\end{align}
	Note that the term in the parentheses is exactly the tangent direction of 
	the gradient (i.e., the direction of the gradient minus the normal 
	component). We can derive that
	
	\begin{align}
	\boldsymbol{x} &= \boldsymbol{x}^{(k-1)} + \epsilon\cdot 
	\frac{\left(\boldsymbol{x} - \boldsymbol{x}^{(k-1)}\right)}{\epsilon} \\ 
	\label{eq:8}
	&\approx\boldsymbol{x}^{(k-1)} + \epsilon \cdot 
	\frac{\mathbf{F}\left(\boldsymbol{x}^{(k-1)}\right) \cdot 
	\hat{\mathbf{n}}}{\left| \mathbf{F}\left(\boldsymbol{x}^{(k-1)}\right) 
	\right|}  \cdot \hat{\mathbf{n}}.,
	\end{align}
	
	where  $\boldsymbol{x}$ is the center of the $\epsilon$-sphere 
	$S_{\boldsymbol{x}}$, substituting Eq.~\ref{eq:8} into Eq~\ref{eq:lemma}, 
	we then have the updating rule in Eq.~\ref{eq:update}.
\end{proof}
Lemma~\ref{lemma} tells us that given any initial point $\boldsymbol{x}^{(0)}$ 
on the sphere, we can always find a point with local minimal flux value. 
Ideally, by seeding multiple initial points, following the recurrence in 
Eq.~\ref{eq:update}, we can obtain a set of local minima points on 
$S_{\boldsymbol{x}}$. The integration in Eq.~\ref{eq:def1} can then be 
approximated by summation of the gradient fluxes at these local minima points. 
But this approximation is suboptimal because the updating rule in 
Eq.~\ref{eq:lemma} will cause bias towards those points with \emph{minimal 
fluxes}, instead of distributing evenly to points with \emph{negative fluxes}. 
To overcome this issue, we add a $\text{sign}$ operation on the recurrence in 
Eq.~\ref{eq:update}, implemented in Algorithm \ref{alg:neflag}, to introduce 
additional stochasticity, hence making the negative flux points distributed 
more evenly. Algorithm~\ref{alg:neflag} describes this trick as well as the 
step-by-step procedure of calculating NeFLAG.

Here we note Eq.~\ref{eq:def1} provides a {\it formulation} (or a {\it framework}) to calculate a representing vector for the neighborhood near $\boldsymbol{x}$. We develop Algorithm~\ref{alg:neflag} as {\it one way of approximation} to efficiently perform the experiments. There are other algorithms with better approximation accuracy that we may explore the different approximation tricks in the future.

\begin{algorithm}[tb]
	\caption{$\text{NeFLAG}(f, \boldsymbol{x}, n, S_{\boldsymbol{x}}, \epsilon, 
	m$)}
	\label{alg:neflag}
	\textbf{Input}: $f$: Classifier,  $\quad\boldsymbol{x}$: input, 
					$\quad n$: number of negative flux samples, 			
					$S_{\boldsymbol{x}}$: $\epsilon$-sphere, $\quad\epsilon$: 
	radius of $S_{\boldsymbol{x}}$, $\quad m$: max number of backpropagation steps;\\
	\textbf{Output}: Attribution map $\text{NeFLAG}$;
	
	\begin{algorithmic}[1] 
		\STATE $\text{NeFLAG}\leftarrow 0$;
					$\quad j \leftarrow 0$; $\quad i \leftarrow 0$ ;
		\WHILE{$i=1:n$}
		\WHILE{$j=1:m$}
		\STATE Randomly sample $\tilde{\boldsymbol{x}}$ on sphere 
		$S_{\boldsymbol{x}}$;
		\STATE $\tilde{\boldsymbol{x}} = \boldsymbol{x} - \epsilon \cdot 
		\text{sign}\left(\frac{\mathbf{F}\left(\tilde{\boldsymbol{x}}\right)}{\left|
		\mathbf{F}\left(\tilde{\boldsymbol{x}}\right) \right|}\right)$ ;
		\ENDWHILE
		\STATE $\text{NeFLAG} = \text{NeFLAG} + 
		\mathbf{F}\left(\tilde{\boldsymbol{x}}\right) \odot 
		(\boldsymbol{x}-\widetilde{\boldsymbol{x}})$;
		\ENDWHILE
	\end{algorithmic}
\end{algorithm}

\begin{figure*}[t]
	\centering
	\includegraphics[width=0.85\linewidth]{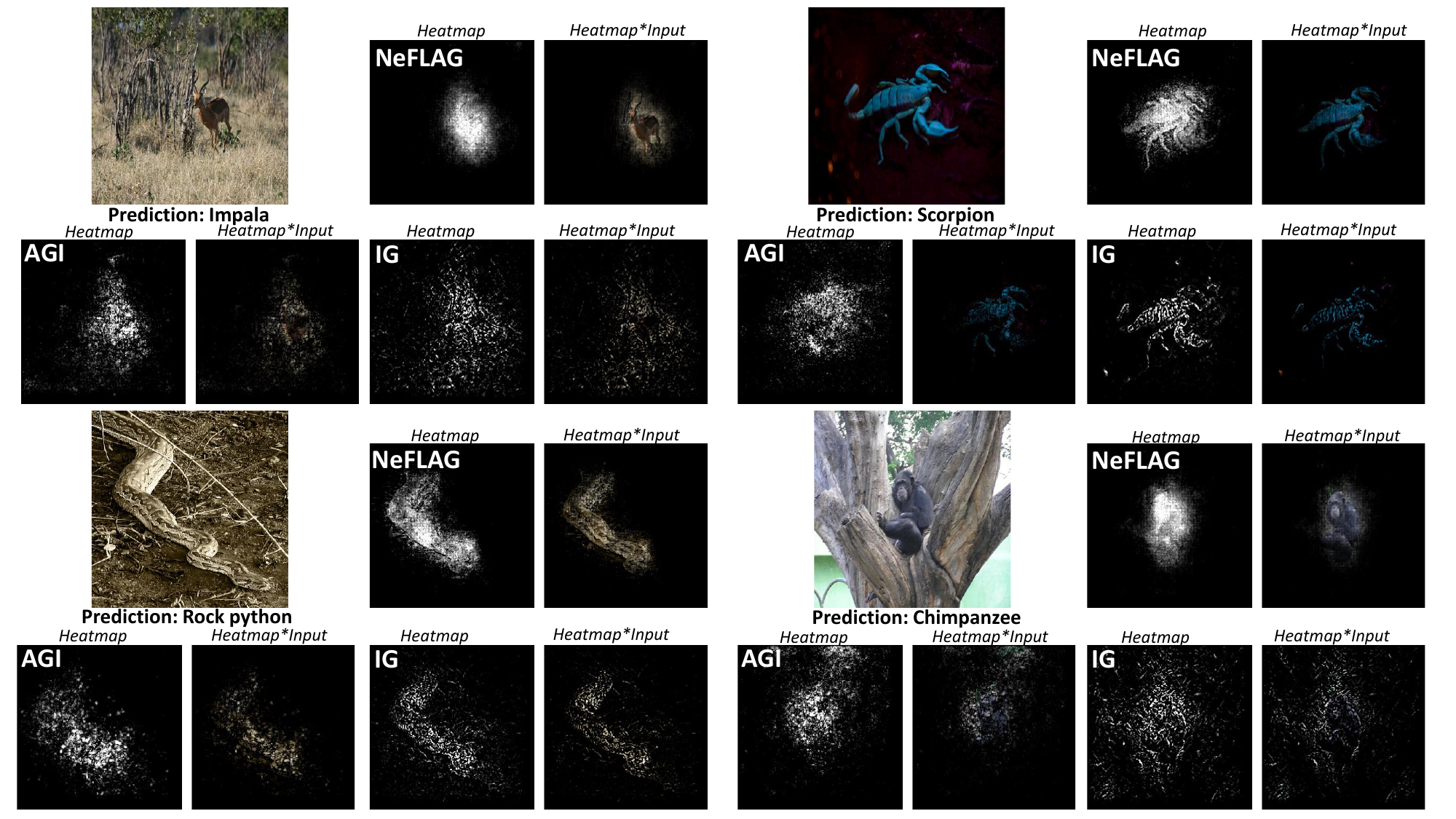}
	\caption{Examples of attribution maps obtained by NeFLAG, AGI and IG 
	methods. The underlying prediction model is InceptionV3 (additional 
	examples for ResNet152 and VGG19 can be found in the supplementary 
	materials). Compared to AGI and IG, we observe that NeFLAG's attribution map 
	has a clearer shape and focuses more densely on the target object.}
	\label{fig:ex}

\end{figure*}

\subsection{Connection to Taylor Approximation}
NeFLAG can be viewed as a generalized version of Taylor approximation. Here we 
show that NeFLAG can not only be viewed as a gradient accumulation method, but
also an extension to some local approximation methods. Given that 
$S_{\boldsymbol{x}}$ is an $N$-d $\epsilon$-sphere centered at 
$\boldsymbol{x}$, for any point $\tilde{\boldsymbol{x}}$ on the sphere surface, 
we can represent the normal vector $\hat{\mathbf{n}}$ as 
$(\tilde{\boldsymbol{x}} - \boldsymbol{x}) / \epsilon$. This inspires us to 
investigate its connection to the first order Taylor approximation. As pointed 
out by \cite{montavon2017explaining}, Taylor decomposition can be applied at a 
nearby point $\tilde{x}$, then the first order decomposition can be written by
\begin{align}\label{eq:taylor}
\begin{aligned}
f(\boldsymbol{x}) 
&=f(\widetilde{\boldsymbol{x}})+\left(\left.\nabla_{\boldsymbol{x}} f 
\right|_{\boldsymbol{x}=\widetilde{\boldsymbol{x}}}\right)^{\top} 
\cdot(\boldsymbol{x}-\widetilde{\boldsymbol{x}})+\eta, 
\end{aligned}
\end{align}
where $\eta$ is the error term of second order and higher.
We use $R(\boldsymbol{x}) = \left(\left.\nabla_{\boldsymbol{x}} f 
\right|_{\boldsymbol{x}=\widetilde{\boldsymbol{x}}}\right) 
\odot(\boldsymbol{x}-\widetilde{\boldsymbol{x}})$ to represent a heatmap 
generated by this approximation process. Note that the heatmap 
$R(\boldsymbol{x})$ completely attributes $f(\boldsymbol{x})$ if 
$f(\tilde{\boldsymbol{x}}) = 0$ and the error term $\eta$ can be omitted.

One may notice that if $\tilde{\boldsymbol{x}}$ is by chance on the N-d 
$\epsilon$-sphere, the first order Taylor approxmation of $\boldsymbol{x}$ from 
$\tilde{\boldsymbol{x}}$ is then equivalent to the flux estimation at 
$\tilde{\boldsymbol{x}}$. To explain it more clearly, we can simply view 
$-(\boldsymbol{x}-\tilde{\boldsymbol{x}}) / 
|\boldsymbol{x}-\tilde{\boldsymbol{x}}|$ as the normal vector 
$\hat{\mathbf{n}}$ in Eq.~\ref{eq:def1}, and 
$\left(\left.\nabla_{\boldsymbol{x}} f 
\right|_{\boldsymbol{x}=\widetilde{\boldsymbol{x}}}\right)$ is exactly the 
gradient vector $\mathbf{F}$ at $\tilde{\boldsymbol{x}}$. Hence the first order 
Taylor decomposition and a single point flux estimation differs only by a 
factor of $-\epsilon$ as $\epsilon=|\boldsymbol{x}-\tilde{\boldsymbol{x}}|$.

Since we only care about the negative fluxes, then the second term of 
Eq~\ref{eq:taylor}, i.e.,  $\left(\left.\nabla_{\boldsymbol{x}} f 
\right|_{\boldsymbol{x}=\widetilde{\boldsymbol{x}}}\right)^{\top} 
\cdot(\boldsymbol{x}-\widetilde{\boldsymbol{x}})$ must be positive. If we 
further omit the error term $\eta$, we must have $f(\boldsymbol{x}) = 
f(\tilde{\boldsymbol{x}}) + \text{positive value} $. It tells us 
that in the perspective of first order Taylor decomposition, the negative flux 
indeed attributes to positive predictions, bridging the connection between 
NeFLAG and other local approximation methods.

\begin{table*}[]
\centering
\begin{tabular}{@{}c|c|ccccc|ccc@{}}
\toprule
Metrics                                                                    & Methods/Models     & IG    & SG    & GIG   & EG    & AGI   & NeFLAG-1       & NeFLAG-10 & NeFLAG-20      \\ \midrule
\multirow{3}{*}{\begin{tabular}[c]{@{}c@{}}Deletion\\ Score $\downarrow$ \end{tabular}}  & VGG19       & 0.071 & 0.065 & 0.054 & 0.041 & 0.040 & \textbf{0.034} & 0.052     & 0.059          \\
                                                                           & ResNet152   & 0.132 & 0.091 & 0.090 & 0.066 & 0.068 & \textbf{0.055} & 0.076     & 0.081          \\
                                                                           & InceptionV3 & 0.122 & 0.079 & 0.096 & 0.049 & 0.059 & \textbf{0.048} & 0.066     & 0.068          \\ \midrule
\multirow{3}{*}{\begin{tabular}[c]{@{}c@{}}Insertion\\ Score $\uparrow$ \end{tabular}} & VGG19       & 0.223 & 0.312 & 0.304 & 0.338 & 0.401 & 0.416          & 0.521     & \textbf{0.535} \\
                                                                           & ResNet152   & 0.332 & 0.380 & 0.437 & 0.447 & 0.480 & 0.485          & 0.568     & \textbf{0.578} \\
                                                                           & InceptionV3 & 0.375 & 0.465 & 0.465 & 0.478 & 0.480 & 0.544          & 0.618     & \textbf{0.625} \\ \bottomrule
\end{tabular}

\caption{Quantitative evaluation using deletion and insertion scores. NeFLAG-1, 
	NeFLAG-10 and NeFLAG-20 are NeFLAG methods with various number of negative flux samples. Note {\bf NeFLAG-1} already {\bf outperforms all} the competing methods with a {\bf linear computational complexity} of the number of backprogation steps (see Appendix on Computational Complexity and Overhead for detailed analysis).}
\label{table:quantitative}
\vspace{-5mm}
\end{table*}
	

\section{Experiments}
In this Section, we demonstrate and evaluate NeFLAG's performance both 
qualitatively and quantitatively using a diverse set of DNN architectures and 
baseline interpretation methods. 

\subsection{Experiment Setup}
\noindent\textbf{Models:} InceptionV3 \cite{szegedy2015rethinking}, ResNet152 
\cite{he2015deep} and VGG19 \cite{simonyan2014very} are selected as the 
pre-trained DNN models to be explained.

\noindent\textbf{Dataset:} The ImageNet \cite{imagenet_cvpr09} dataset is used 
for all of our experiments. ImageNet is a large and complex data set (compared with smaller and simpler data sets such as Places, CUB200, or Flowers102) for us to better demonstrate the key advantages of our approach. 

\noindent\textbf{Baseline Interpretation Methods:} IG 
\cite{sundararajan2017axiomatic} and AGI \cite{pan2021explaining} are selected 
as baselines for both qualitative and quantitative interpretation performance 
comparison. We also include Expected Gradients (EG)\cite{erion2021improving}, Guided Integrated Gradient \cite{kapishnikov2021guided}, and SmoothGrad \cite{smilkov2017smoothgrad} in 
quantitative comparison as they can be viewed as smoothed versions of attribution methods. We use the default setting provided by captum \footnote{\url{https://captum.ai/}} for IG, EG and SmoothGrad methods. For AGI, we adopt the default parameter settings reported in \cite{pan2021explaining}, i.e., step size $\epsilon=0.05$, number of false classes $n=20$. For GIG, we also used the default settings provided in their latest official implementation \cite{kapishnikov2021guided}. Here we focus on the comparison with {\it gradient based methods} since non-gradient methods typically require a surrogate and additional optimization processes. 


\begin{figure*}[t]
	\centering
	\includegraphics[width=0.90\linewidth]{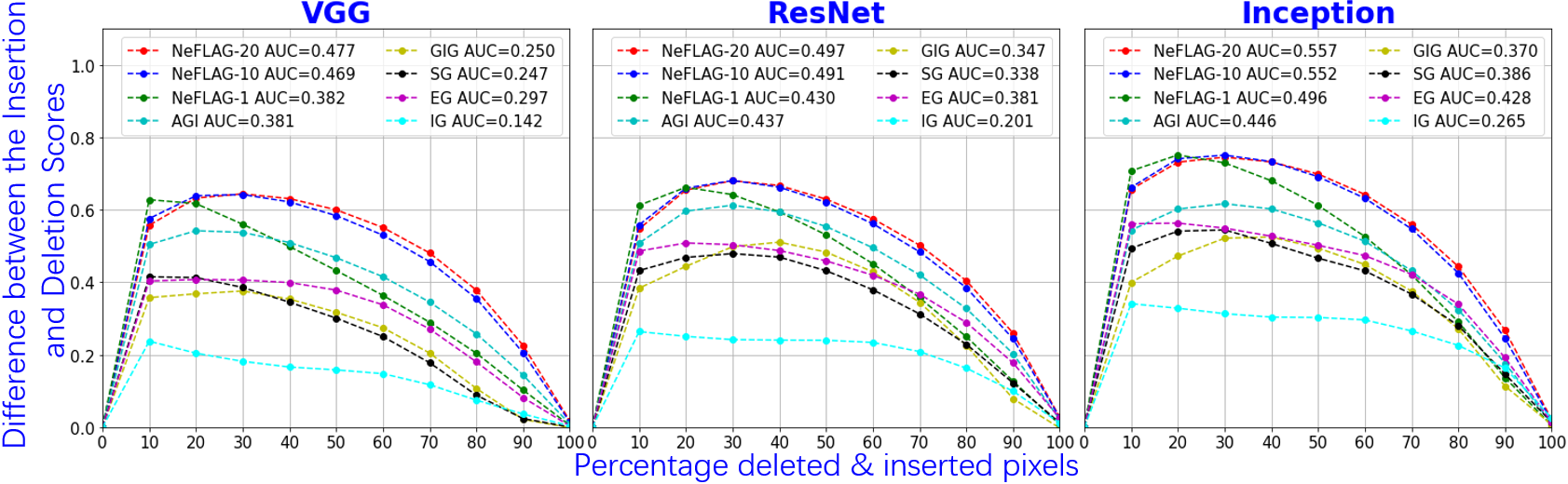}
	\caption{Quantitative Performance Comparison in terms of Difference between the Insertion and Deletion Scores.}
	\label{fig:comparison}
 \vspace{-5mm}
\end{figure*}

\subsection{Qualitative Evaluation}
We first show some examples to demonstrate a better quality of the
attribution maps generated by NeFLAG than by other baselines. The NeFLAG is configured as follows: $\epsilon$-sphere radius is set 
to $\epsilon=0.1$, and the number of random negative flux point 
$\tilde{\boldsymbol{x}}$ is $n=20$. Figure~\ref{fig:ex} shows the attribution 
maps generated by various interpretation methods for the Inception V3 model (examples 
for ResNet152 and VGG19 models can be found in the Appendix). 
Qualitatively speaking, we denote an attribution map is faithful to model's prediction if 1) it focuses on 
the target objects, and 2) has clear and defined shapes w.r.t. class label instead of sparsely distributed 
pixels. Based on this notion, it is clearly observed that NeFLAG has a better 
attribution heatmap quality than AGI and IG. 

A key observation from Figure~\ref{fig:ex} is that the NeFLAG attribution heatmap provides a defined shape of the entity of the class whereas IG and AGI attribution heatmaps do not reflect that ability. In fact, the latter heatmaps are scattered and do not retain a definite shape. For example, in Figure~\ref{fig:ex}, IG and AGI’s heatmaps fail to delineate a definite and slender shape of the \textit{Rock Python}. The same notion happens with the \textit{Impala} and \textit{Chimpanzee} where the attribution heatmaps of IG and AGI are scattered across the background and class entity. We note that the results shown in Figure~\ref{fig:ex} are very typical (not cheery-picked) in our experiments. We also provide more examples from each class to demonstrate the superior quality of attribution maps in the Appendix.

We note that the striking performance disparities referenced above could mainly be caused by the inconsistency of path accumulation methods. Since both IG and AGI incorporate a certain kind of path integration, any gradient 
vector on that path is taken into account of the final interpretation. However, 
the gradient vectors on the path aren't necessarily unique and consistent. The 
rationale behind this argument is that neither IG nor AGI is capable of 
guaranteeing that their choice of accumulation path is optimal. For example, in 
IG, the path is a straight-line, which isn't by any means ideal. Similarly in 
AGI, the path of an adversarial attack is chosen. However, due to the stochastic 
effect, a slightly different initialization could result in completely 
different attack paths. Our method NeFLAG, on the other hand, takes advantage 
of negative flux aggregation without the burden of path accumulation. 

In fact, NeFLAG's advantage is highly pronounced when we set the number of negative flux points to 1 (i.e., $n=1$), where we will show hereinafter using quantitative experiments, is sufficient for a superior performance to the competing methods. Yet, NeFLAG-1 achieves an unequaled computational efficiency via a single point gradient evaluation instead of a path integration used in IG type methods. 


\subsection{Quantitative Evaluation}\label{sec:quantitative}

In the quantitative experiments, we compare the performance of IG, AGI, EG, GIG, SmoothGrad and three NeFLAGs variants with different numbers of sampled negative flux points. Since our NeFLAG is considered as a {\it robust local attribution} method, the size the neighborhood (radius of $\epsilon$-sphere) and number of negative fluxes are tuning parameters. Similarly, the baseline local attribution method, SmoothGrad, also has two tuning parameters: the noise level and the number of samples to average over. Contrastively, IG, AGI, EG, and GIG are {\it global attribution} methods so there is no tuning parameter involved. We extensively investigate the neighborhood size and experiment with various number of negative fluxes (e.g., NeGLAG-1, NeGLAG-10, and NeGLAG-20) to faithfully demonstrate the stability of our method in comparison with others.

Here VGG19, InceptionV3, and ResNet152 are used as underlying DNN prediction models. Since we focus on explaining DNN prediction on a {\it per sample basis} just like other DNN explanation methods, we randomly select 5,000 samples from ImageNet validation dataset with 5 samples from each class as a good representation of the classes. We use insertion score and deletion score as our evaluation metrics\cite{petsiuk2018rise}. We replace the top pixels with black pixels in the first round \cite{petsiuk2018rise} whereas with Gaussian blurred pixels in the second round \cite{sturmfels2020visualizing}. We report the average performance of two rounds of experiments. Experimental details can be found in the Appendix.




Table~\ref{table:quantitative} demonstrates that NeFLAG outperforms other baselines even with only a single negative flux point (NeFLAG-1), and becomes much better when we incorporate sufficient amount of negative flux points (NeFLAG-10 and NeFLAG-20). In Figure \ref{fig:comparison}, we systematically compared the performance of our NeFLAG with the competing methods on three DNN architectures. We used the Difference between the Insertion and Deletion scores as the metric for comparison \cite{shah2021input}. It is because the deletion and insertion scores can be influenced by distribution shift caused by removing/adding pixels \cite{hooker2019benchmark}. Since the shift occurs during both pixel insertion and deletion, focusing on their relative difference instead of their absolute values helps in neutralizing the effects of distribution shift. It is clearly observed from Figure \ref{fig:comparison} that our NeFLAG method outperforms all the competing methods across the three DNN models in terms of the Difference between the Insertion and Deletion scores. We note there still a room for more reliable and comprehensive evaluation metrics for the attribution methods.      


\section{Conclusion}
We develop a novel DNN model explanation technique NeFLAG built off the concept of divergence, and point out a connection to other local approximation methods. NeFLAG doesn't require a baseline, nor an integration path. This is achieved by converting a volume integration of the second order gradients to a surface integration of the first order gradients using divergence theorem. Both qualitative and quantitative experiments demonstrate a superior performance of NeFLAG in explaining DNN predictions over the strong baselines.

\newpage

\bibliographystyle{named}
\bibliography{ijcai23}

\clearpage
\appendix

\section{Appendix }
\subsection{Additional experiment details}
\paragraph{Visualization:} In order to better visualize the attribution map obtained by the interpretation methods, we adopt the attribution processing method in \cite{pan2021explaining} to optimize the attribution map visualization quality, i.e., for all  attribution values that are smaller than $80\%$ percentile, we assign them as the lower bound $q$. Similarly, for all values that are larger than $99\%$ percentile, we assign them to be the upper bound $u$. Finally, all values are normalized within $[0, 1]$ for visualization purpose.

\subsection{Insertion score and deletion score:} According to \cite{petsiuk2018rise}, insertion score and deletion score can be used to evaluate explanation qualities of different attribution methods. The idea of insertion score sources from the insertion game: starting from a black image (the original version), we sequentially insert pixels of the input image based on the values of the attribution map, the pixels with larger attribution values are inserted first. At each step of this insertion game, a prediction score can be calculated using the prediction model. Hence from the first pixel to the last, there is a function curve $g$ to describe the insertion game. The insertion score is then obtained by calculating the area under the curve (AUC) of $g$. Similarly, the deletion score can also be defined and obtained.

In addition to using a black image, we ablate by replacing each pixel with its Gaussian-blurred counterpart ($\sigma = 20$) for the input as in \cite{sturmfels2020visualizing}. The intuition is that eliminating the image's most salient pixels can result in high-frequency edge artifacts, as well as the associated distribution shift problem \cite{hooker2019benchmark}, which can be mitigated by this Gaussian-blurring technique. As the result, we report the average performance of two rounds of deletion/insertion games, using black and blurred images respectively.

\subsection{Ablation study on our NeFLAG}

We analyze the effect of varying $\epsilon$-sphere radius values and the number of negative fluxes ($n$). There is another parameter $m$ which represents the max number of backpropagation steps. However, we do not include it in this Ablation Study since it has little impact on the result as the average actual number of steps required is much less than the set $m = 20$. 

From Table \ref{table:ablation}, it can be clearly observed that NeFLAG-1, regardless of the neighborhood size $\epsilon$, already outperforms all the competing methods in terms of the Difference btw. Insertion and Deletion Score (shown in Figure \ref{fig:comparison} of the main text). This is translated into the fact that our NeFLAG algorithm achieves a robust yet superior performance with an efficient computational cost (only 1 flux sample is used). When increasing the number of flux samples from 1 (NeFLAG-1) to 10 (NeFLAG-10) and 20 (NeFLAG-20), the performance of our NeFLAG algorithm monotonically increases. This ablation study provides a strong evidence that the superior performance of our NeFLAG generalizes well to various hyper-parameter settings and across different DNN architectures.    

\begin{table*}[ht]
\centering
\begin{tabular}{cc|ccc|ccc|ccc}
\hline
Metrics                                                                     & Methods     & \multicolumn{3}{c|}{NeFLAG-1}  & \multicolumn{3}{c|}{NeFLAG-10} & \multicolumn{3}{c}{NeFLAG-20}                    \\ \hline
\multicolumn{2}{c|}{$\epsilon$-sphere radius}                                             & 0.05  & 0.1            & 0.2   & 0.05     & 0.1      & 0.2      & 0.05           & 0.1            & 0.2            \\ \hline
\multirow{3}{*}{\begin{tabular}[c]{@{}c@{}}Deletion\\ Score $\downarrow$ \end{tabular}}   & VGG19       & 0.046 & \textbf{0.034} & 0.045 & 0.054    & 0.052    & 0.054    & 0.059          & 0.059          & 0.062          \\
                                                                            & ResNet152   & 0.080 & \textbf{0.055} & 0.078 & 0.084    & 0.076    & 0.080    & 0.087          & 0.081          & 0.084          \\
                                                                            & InceptionV3 & 0.071 & \textbf{0.048} & 0.065 & 0.074    & 0.066    & 0.064    & 0.077          & 0.068          & 0.066          \\ \hline
\multirow{3}{*}{\begin{tabular}[c]{@{}c@{}}Insertion\\ Score $\uparrow$ \end{tabular}}  & VGG19       & 0.438 & 0.416          & 0.430 & 0.505    & 0.521    & 0.515    & 0.538          & \textbf{0.535} & 0.531          \\
                                                                            & ResNet152   & 0.509 & 0.485          & 0.513 & 0.549    & 0.568    & 0.567    & 0.556          & \textbf{0.578} & 0.577          \\
                                                                            & InceptionV3 & 0.555 & 0.544          & 0.576 & 0.607    & 0.618    & 0.614    & 0.614          & \textbf{0.625} & 0.624          \\ \hline
\multirow{3}{*}{\begin{tabular}[c]{@{}c@{}}Difference\\ Score $\uparrow$\end{tabular}} & VGG19       & 0.393 & 0.382          & 0.385 & 0.451    & 0.469    & 0.461    & \textbf{0.479} & 0.476          & 0.469          \\
                                                                            & ResNet152   & 0.429 & 0.430          & 0.435 & 0.465    & 0.492    & 0.487    & 0.469          & \textbf{0.497} & 0.493          \\
                                                                            & InceptionV3 & 0.484 & 0.496          & 0.511 & 0.534    & 0.552    & 0.550    & 0.537          & 0.557          & \textbf{0.558} \\ \hline
\end{tabular}
\caption{Ablation study using Deletion and Insertion scores and their Difference.}
\label{table:ablation}
\end{table*}

\subsection{Computational Complexity and Overhead}

{\bf Computational Complexity:} With a pre-trained DNN model, for each image whose class to be predicted by the DNN, there are two variables controlling the computational complexity, $m$ represents the number of backpropagation steps to evaluate its gradients, and $n$ represents the number of reference samples. 

$\mathcal{O}(mn)$: The global path integration methods EG and GIG require multiple integration paths to calculate the attribution maps. So their computational complexity is $\mathcal{O}(mn)$, representing a group of methods with a quadratic complexity.    

$\mathcal{O}(m)$: The global path integration methods IG and AGI, only one reference sample is required ($n = 1$). Similarly, only one flux sample is required for our local approximation method NeFLAG-1. Note from Table 1 of the main text, NeFLAG-1 already outperforms all the competing methods. That is, our NeFLAG achieves the best performance among all with a comparable computational overhead to the efficient IG method. Therefore, the computational complexity for IG, AGI and NeFLAG-1 is $\mathcal{O}(m)$, representing a group of methods with a linear complexity.    

$\mathcal{O}(n)$: The baseline local approximation method SmoothGrad (SG), it requires multiple reference samples from the neighborhood to average over the attribution maps with only 1 backprogation to evaluate the gradients. So its computational complexity is $\mathcal{O}(n)$, representing another group of methods with a linear complexity.   

\noindent{\bf Computational Overhead:} To evaluate the empirical computational overhead, we list the hyper-parameter values of each method that yields the claimed performance as follows. IG: $n = 1, m = 100$, SG: $n = 20, m = 1$, GIG: $n = 1, m = 20$, EG: $n = 5, m = 50$, AGI: $n = 1, m = 20$, NeFLAG-1: $n = 1, m \ll 20$, NeFLAG-10: $n = 10, m \ll 20$, NeFLAG-20: $n = 20, m \ll 20$ where the $m*n$ is the total backpropagation steps required for each method. For our NeFLAG algorithm used in real experiments, it usually takes much less than 20 steps, especially when the $\epsilon$ is large. For example, when $\epsilon = 0.1$, the average $m$ values for each image in the ImageNet validation set are only 1.716 (VGG19), 1.661 (ResNet152), and 1.934 (InceptionV3), much lower than the maximum setting of 20. As a result, \textbf{NeFLAG-1 has the lowest computational overhead among all the competing methods.}

\subsection{Additional qualitative evaluation examples}
Additional examples using InceptionV3, ResNet-152 and VGG-19 can be found in the remaining pages.

\clearpage
\includepdf[pages=-]{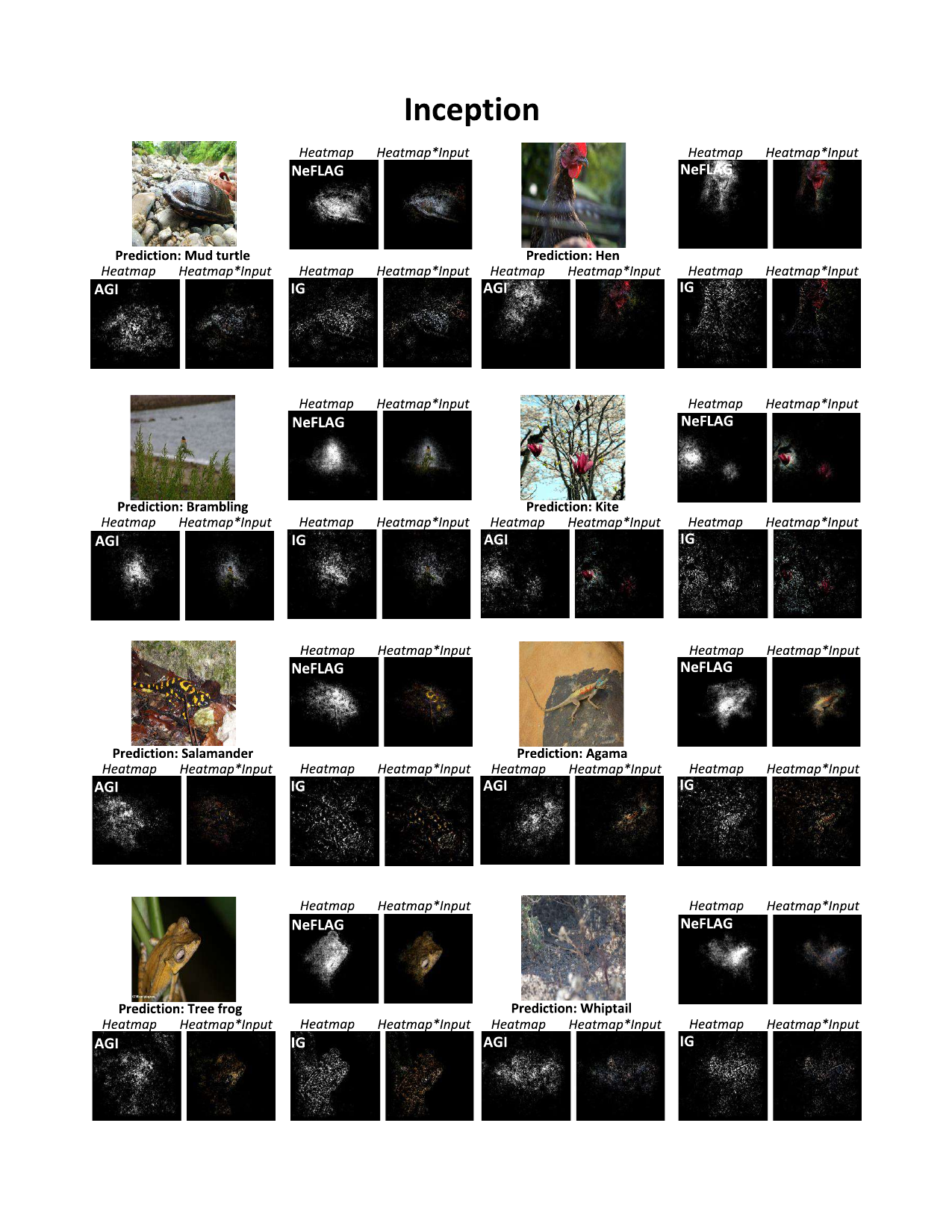}
\end{document}